\DeclareMathOperator*{\argmax}{arg\,max}
\newtheorem{theorem}{Theorem}[section]
\newtheorem{lemma}[theorem]{Lemma}
\def\BibTeX{{\rm B\kern-.05em{\sc i\kern-.025em b}\kern-.08em
    T\kern-.1667em\lower.7ex\hbox{E}\kern-.125emX}}
\begin{document}

\title{{\fontsize{18}{22}\selectfont Hierarchical Reinforcement Learning and Value Optimization for Challenging Quadruped Locomotion}}

\author{%
  Jeremiah Coholich\textsuperscript{1}, 
  Muhammad Ali Murtaza\textsuperscript{1}, 
  Seth Hutchinson\textsuperscript{1}, and 
  Zsolt Kira\textsuperscript{1}%
  \thanks{\textsuperscript{1}Institute of Robotics and Intelligent Machine, Georgia Institute of Technology, Atlanta, GA, USA. Emails: \{jcoholich, mamurtaza, seth, zkira\}@gatech.edu.}
}

\maketitle

\begin{abstract}
We propose a novel hierarchical reinforcement learning framework for quadruped locomotion over challenging terrain. Our approach incorporates a two-layer hierarchy in which a high-level policy (HLP) selects optimal goals for a low-level policy (LLP). The LLP is trained using an on-policy actor-critic RL algorithm and is given footstep placements as goals. We propose an HLP that does not require any additional training or environment samples and instead operates via an online optimization process over the learned value function of the LLP. We demonstrate the benefits of this framework by comparing it with an end-to-end reinforcement learning (RL) approach. We observe improvements in its ability to achieve higher rewards with fewer collisions across an array of different terrains, including terrains more difficult than any encountered during training. 
\end{abstract}

\begin{IEEEkeywords}
Robotics, Reinforcement Learning, Optimization
\end{IEEEkeywords}

\section{Introduction}

In recent years, there has been an explosion of interest in using reinforcement learning (RL) for robotic planning and control. It is possible to learn robot legged locomotion policies from scratch in an end-to-end manner \cite{haarnoja2018learning, ha2020learning, rudin2022learning, yu2020learning, agarwal2023legged, yang2021learning}; however, this is typically challenging and requires extensive reward function engineering, hyperparameter tuning, or environment engineering. While RL promises to be a general framework for robots to autonomously acquire a wide variety of skills, legged locomotion poses a difficult learning and control problem due to underactuation and high-dimensional state and action spaces. 

\begin{figure}[t]
\begin{center}
\includegraphics[scale=0.52]{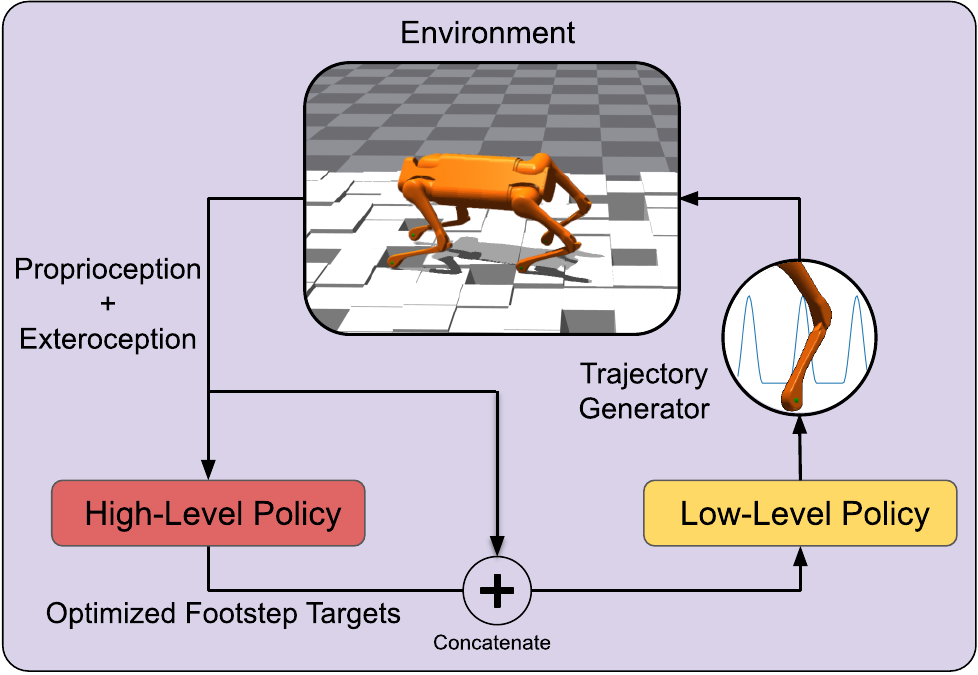}
\caption{The policy architecture incorporating a high-level policy which makes use of the low-level policy's value function for selecting high-value footstep targets}
\end{center}
\end{figure}

To avoid these issues and increase the success rate of learning locomotion policies, researchers began to incorporate various priors into RL algorithms. Most notably, \cite{iscen2018policies} proposes a gait trajectory generator (TG) and limits the RL policy to learning residuals which are added to the TG output. This approach was adopted by others in order to improve development time, sample efficiency, and the success rate of learning locomotion policies \cite{lee2020learning, yu2021visual, escontrela2020zero, iscen2021learning}. We use a similar style of trajectory generator in our proposed approach and in our "end-to-end" reinforcement learning baseline.

Other forms of prior knowledge are feasible as well. Polices can be learned by imitating quadruped animals \cite{peng2020learning}. However, collecting high-quality animal data is difficult to scale because it requires bringing animals into a motion capture lab. Additionally, there exists a significant morphology gap between quadruped animals and state-of-the-art quadruped robots, which do not have ankles for example. Other approaches combine a residual RL-policy with online trajectory optimization and whole-body control \cite{gangapurwala2021real, gangapurwala2020guided}. DeepGait constrains learning with model-based feasibility criteria, bypassing the physics simulator \cite{tsounis2020deepgait}. In contrast to all of these approaches, our proposed method only relies on an open-loop TG and otherwise learns locomotion directly from interactions with the physics simulator.

Hierarchical RL methods have also been developed for other legged embodiments. In ALLSTEPS, the authors train bipedal robots in simulation to walk on increasingly difficult stepping-stone sequences \cite{xie2020allsteps}. However, their foot placement sequence is fixed and cannot be optimized when many stepping stones are present. Li \textit{et al.} proposed a hierarchical RL method for hexapods in which a high-level policy performs MPC-style rollouts over a set of learned low-level primitives \cite{li2020learning}. This requires learning dynamics models for each primitive. In our approach, the high-level policy requires no extra learning once the low-level policy is trained.

In this work, we learn policies that find optimal foot placements on terrain with gaps and height variation. In this domain, legged robots clearly trump wheeled robots, since legged robots require only small, discrete contacts with terrain. Planning these contacts, or footstep placements, is therefore crucial in unlocking the full capability of legged robots. We posit that biasing our policy architecture to focus on footstep placements will improve our ability to traverse such challenging terrain. In addition, the use of a hierarchical framework provides a modular structure, which accommodates the swapping of components.

Our method involves a two-layer hierarchy, where footstep target locations are passed from the high-level policy (HLP) to the low-level policy (LLP). In this setup, we first train the LLP to control a simulated quadruped robot to hit a sequence of randomly generated footstep targets. The HLP then finds optimal footstep locations by leveraging the value function obtained during the training of the LLP. Other works contain similar online optimization approaches. QT-Opt is a technique for online optimization over a learned Q-function using the derivative free cross-entropy method \cite{kalashnikov2018scalable}. Our work includes the addition of a hierarchy and an optimization term which makes our architecture more flexible. We use a combination of derivative-free and derivative-based optimization methods. \cite{li2022hierarchical} uses a similar hierarchical approach leveraging a low-level policy's value function, but focuses on the offline RL setting where distribution shift from the offline training data is a significant concern.

We can summarize the main contributions of this paper as follows:
\begin{itemize}
    \item A hierarchical learning-based quadruped control architecture in which the high-level footstep policy is obtained without requiring additional training. 
    \item An online value-optimization process for selecting low-level policy goals, obtained without additional environment samples beyond low-level policy training.
    \item Validation of the capability of the proposed methodology to generalize beyond its training environment, compared with an end-to-end RL policy on the task of quadruped locomotion over rough terrain
\end{itemize}

The rest of the paper is organized as follows: Section \ref{sec:LLP_Training} gives RL preliminaries and discusses LLP training. Section \ref{sec:HLP} outlines the HLP and its associated action space and objective function. Experiments and results are presented in Section \ref{sec:results}, and future work and conclusions are given in section \ref{sec:conclusion}. Video results and code are available at: \href{http://www.jeremiahcoholich.com/publication/hrl_optim/}{www.jeremiahcoholich.com/publication/hrl\_optim/}.

\section{Low-Level Policy Training}\label{sec:LLP_Training}

The low-level policy (LLP) is goal-conditioned and  outputs actions directly to the robot. Our method leverages the LLP's value function, which gives the expected cumulative reward for a given state, goal, and policy. As a result, an actor-critic RL algorithm must be used. In theory, the algorithm can either be on-policy or off-policy.

We train the LLP to hit a randomized sequence of procedurally generated footstep targets. Both the actor and critic networks take the same input consisting of goal footstep target locations and robot observations.

\subsection{RL Preliminaries}
We formulate the task of hitting footstep targets as a partially-observable Markov decision process, which is a tuple $ \left(S, \mathcal{O}, A, p, r,\rho_{0}, \gamma\right) $. Here, $S$ is the set of environment states, $\mathcal{O}$ is the set of observations, $A$ is the set of policy actions, $p: S \times A \rightarrow S$ is the transition function of the environment, $r: S \times A \times S \rightarrow \mathbb{R}$ is the reward function, $\rho_{0}$ is the distribution of initial states, and $\gamma$ is a discount factor. Our goal is to find an optimal policy \(\pi^*: S \rightarrow A\) that maximizes the discounted sum of future rewards \(J(\pi)\) over time horizon $H$.

\begin{equation}
J(\pi)=\mathbb{E}_{\{s_i, a_i\}_0^{H} \sim \pi, \, \rho_0} \Bigg[ \sum_{t=0}^{H} \gamma^{t} r\left(\mathbf{s}_{t}, \mathbf{a}_{t}, \mathbf{s}_{t+1}\right) \Bigg] 
\end{equation}
\begin{equation}
\pi^{*}=\argmax_{\pi} J(\pi)
\end{equation}

We use the proximal policy optimization (PPO) \cite{schulman2017proximal}, an on-policy actor-critic method, to solve for $\pi^*$ with $\lambda = 0.99$.  Additionally, we train a value network to predict the value of a state given the current policy. The value network is trained with the mean-squared error loss and generalized advantage estimation (GAE) \cite{schulman2015high} to stabilize training.

\begin{equation}
\begin{aligned}
V_{\pi}\left(\mathbf{s_0}\right) := \mathbb{E}_{\pi,\, p} \Bigg[ 
\sum_{t=0}^{H} \gamma^{t} r\left(\mathbf{s}_{t}, \mathbf{a}_{t}, \mathbf{s}_{t+1}\right) \Bigg] \\
\end{aligned}
\end{equation}

\begin{equation*}
\mathbf{a}_t \sim \pi(\cdot\mid \mathbf{s}_t), \ 
\mathbf{s}_{t+1} \sim p(\cdot\mid \mathbf{s}_t,\mathbf{a}_t )
\end{equation*}

The policy and value networks are parameterized as separate multilayer perceptrons with two hidden layers of size 128.

\subsection{Action Space}

We use the Policies Modulating Trajectory Generators (PMTG) architecture \cite{iscen2018policies} with the foot trajectories given in \cite{lee2020learning}. Our 15-dimensional action space consists of trajectory generator frequency, step length, standing height, and 12 residuals corresponding to the 3D position of each foot. The trajectory generator outputs foot positions in the hip-centered frame (as defined in \cite{lee2020learning}). These are converted into joint positions with analytical inverse kinematics and tracked with PD control. The trajectory generator cycle is synced to a phase variable $\phi_t \in [0, 2\pi)$, where $\mathcal{S}:= [0.25\pi, 0.75\pi]\cup[1.25\pi, 1.75\pi]$ represents the swing phase of each leg and $[0, 2\pi)/\mathcal{S}$ is the support phase.

We design the TG to output a trotting gait, which means two feet have targets at any given time. $\mathcal{N}$ are the pair of feet that the robot has active targets at time $t$ with $\mathcal{N} \in \{\{1,\,4\},\, \{2,\,3\}\}$. The foot indices in numerical order correspond to the front-left, front-right, rear-left, and rear-right feet. 

\subsection{Observation Space}\label{subsec:observation_space}
The policy observation is a vector $\mathcal{O}_t = \{\mathbf{x}$, $ \mathbf{\dot{x}}$, $ \tau,\,\mathbf{O}$, $ \mathbf{c}$, $ \mathbf{p}$, $ \mathbf{f}$, $\cos{\phi}$, $\sin{\phi}$, $ \mathbf{a_{t-1}}$, $ \mathbf{a_{t-2}}$, $ \mathcal{F}\}$ where $\mathbf{x} \in \mathbb{R}^{12}$ represents the foot positions in the hip-frame, $\tau \in \mathbb{R}^{12}$ is the joint torques, $\mathbf{O} \in \mathbb{R}^{4}$ is the IMU data consisting of $\{\theta_{\text{roll}}, \theta_{\text{pitch}}, \dot{\theta}_{\text{roll}}, \dot{\theta}_\text{{pitch}}\}$, $\mathbf{c} \in \{0, 1\} \subset \mathbb{R}^4$ is a vector giving the contact state of each foot, $\mathbf{p}  = \{p_{1,x}, p_{1,y}, p_{2,x}, p_{2,y}, p_{3,x}, p_{3,y}, p_{4,x}, p_{4,y}\} \in \mathbb{R}^8$ gives the x and y distances from each foot to the next (for $i \in \mathcal{N}$) or previous ($i \notin \mathcal{N}$) footstep targets, $\mathbf{f} \in \{0,1\} \subset \mathbb{R}^4$ is a multi-hot encoding of $\mathcal{N}$, $\phi \in \mathbb{R}$ is the phase of the trajectory generator, $\mathbf{a_{t-1}}$ and $\mathbf{a_{t-2}}$ give the previous two actions taken by the policy, and $\mathcal{F}$ is a scan of points around each foot.

\subsection{Reward Function}\label{subsection:training_env}
\label{section:training_environment}
The reward function for the LLP encourages hitting footstep targets and contains additional terms to encourage a reasonable gait. The reward function terms are as follows:

\subsubsection{Footstep Target Reward}\label{subsubsec:hit_footstep_target} 
Equation \ref{eq:hit_targets_rew} defines this reward term, where $h_{i,t} \in \{0, 1\}$ indicates whether or not foot $i$ has hit its footstep target at time $t$. A target is considered hit if the foot makes contact with at least 5 N of force in a 7.5 cm radius around the target while the trajectory generator is in the contact phase for that foot. We define $\mathbf{d}_{i,t}$ as the distance in the xy plane from the foot center to the target center. If the robot hits both active footstep targets at once, the reward for each foot is added, the total is tripled, and the environment advances to the next pair of targets. This reward function is inspired from \cite{xie2020allsteps} and is given by
\begin{equation}
\label{eq:hit_targets_rew}
    \kappa_{FT}\ \left[2\ {\prod_{i \in \mathcal{N}}} h_{i,t} + 1 \right]{{\sum_{i \in \mathcal{N}}}\ h_{i, t} \left[1 + 0.5 \left(1 - \frac{\mathbf{d}_{i, t}}{ \mathbf{d}_{\text{hit}}}\right)\right]}
\end{equation}

where $\kappa_{FT}$ is the weighting term for the reward function and $\mathbf{d}_{\text{hit}}$ is the xy distance threshold for hitting a footstep target (set to 7.5 cm). The factor $ \left(2.0\ {\prod_{i \in \mathcal{N}}} h_{i,t} + 1 \right)$ triples the per-foot rewards if both footstep targets are achieved on the same timestep.

\subsubsection{Velocity Towards Target}
To provide denser rewards that encourage hitting footstep targets, we reward foot velocity towards targets.  
\vspace{-5pt}
\begin{equation}
    \kappa_{VT} {{\sum_{i \in \mathcal{N}}}} \dot{\mathbf{d}}_{i,t}
\end{equation}

\subsubsection{Smoothness Reward}
To ensure a smooth robot motion, we added a penalizing term to the norm of second-order finite differences of the actions, where $\mathbf{a}_t$ is the action at timestep $t$.
\vspace{-5pt}
\begin{equation}
    \kappa_{S}\|\mathbf{a_{t}} - 2\mathbf{a_{t-1}} + \mathbf{a_{t-2}} \|_2
\end{equation}

\subsubsection{Foot Slip Penalty}
This term penalizes xy translation greater than 2 cm for feet that are in contact. $c_{i,t}$ gives the vertical contact force for foot $i$ at time $t$ in Newtons. $x_{i, t} \in \mathbb{R}^2$ is the position in meters on the x-y plane for foot $i$ at time $t$. 
\vspace{-5pt}
\begin{equation}
-\kappa_{SL}\left|\left\{ i \,\middle|\, 
\substack{1\le i \le 4 \\[1mm]
\|x_{i,t} - x_{i,t-1}\|_2 > 0.02 \\[1mm]
c_{i,t} > 0 \\[1mm]
c_{i,t-1} > 0
}
\right\}\right|
\end{equation}

\subsubsection{Foot Stay Reward}
A trotting gait requires two feet to have active footstep targets at any time. To prevent the robot from immediately moving its feet off of footstep targets after they are hit, we reward the agent for keeping its feet on previous targets.
\begin{equation}
    \kappa_{FS}{{\sum_{i \in \{1,2,3,4\}\backslash\mathcal{N}}}\ h_{i, t}   \left[  1 +  \frac{1}{2}\left(1 - \frac{\mathbf{d}_{i, t}}{ \mathbf{d}_{\text{hit}}}\right)\right]}
\end{equation}

\subsubsection{Collision Penalty}
We add a penalty if any robot linkage collides with another linkage or with terrain, excluding the case of robot feet colliding with terrain. The penalty is given by Equation \ref{eq:collision}, where $\mathbf{g}$ is the number collisions.
\begin{equation}
\label{eq:collision}
    -\kappa_{C}\mathbf{g}
\end{equation}

\subsubsection{Trajectory Generator Swing Phase Reward}
This term rewards the trajectory generator for entering the swing phase $\phi_t$, weighted by the frequency of the trajectory generator ($f_{\text{PMTG}}$). This term prevents the RL algorithm from learning a degenerate policy that remains at the same place and collects maximum rewards for foot stay, foot slip, and smoothness. This reward is given by Equation \ref{eq:swing}, where ${\mathbf{1}}\{\cdot\}$ is the indicator function.
\begin{equation}
\label{eq:swing}
{\mathbf{1}}_{\mathcal{S}}\{\phi_t\}\ f_{\text{PMTG}}    
\end{equation}

\begin{figure}[]
\begin{center}
\includegraphics[scale=0.5]{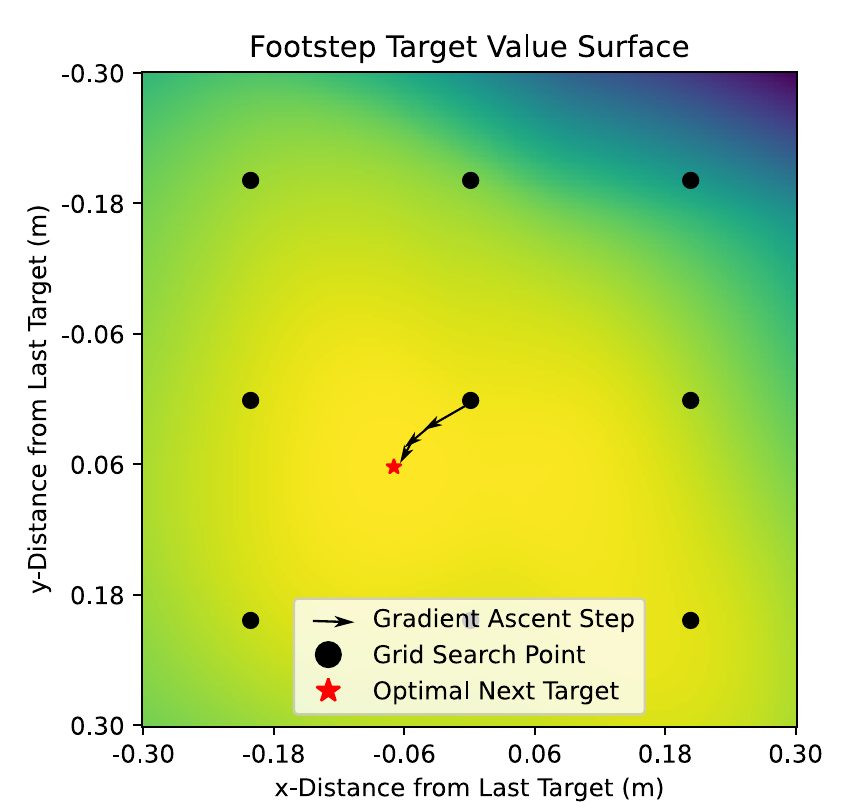}
\caption{A visualization of the high-level policy optimization approach on a 2D slice of the value function goal-space}
\label{fig:optimization}
\end{center}
\end{figure}

\section{High-Level Policy}
\label{sec:HLP}
The purpose of the HLP is to choose a goal, or footsteps target, for the LLP. No additional samples from the environment or neural network parameter updates are required for the HLP once the LLP is fully trained. The HLP makes use of the LLP value function, which is typically discarded after RL training.  

First, we describe the action space of the HLP and its objective function. Then, we discuss the online optimization process used to find optimal actions for the LLP.
\subsection{Action Space} 
The HLP action space is a continuous eight-dimensional space that encodes the x and y relative positions of the next footstep targets for all four feet of the quadruped, which is the vector $\textbf{p}$ defined in Section \ref{subsec:observation_space}.

$$A_{HLP} := \textbf{p} \subset \mathcal{O}$$
In addition to the current observation $\mathbf{o}_t$, the HLP also receives the robot's yaw angle, $\theta_{yaw}$. This necessary to define a direction for travel.

\subsection{Objective Function}

\begin{figure*}[t]
\begin{center}
\includegraphics[scale=0.53]{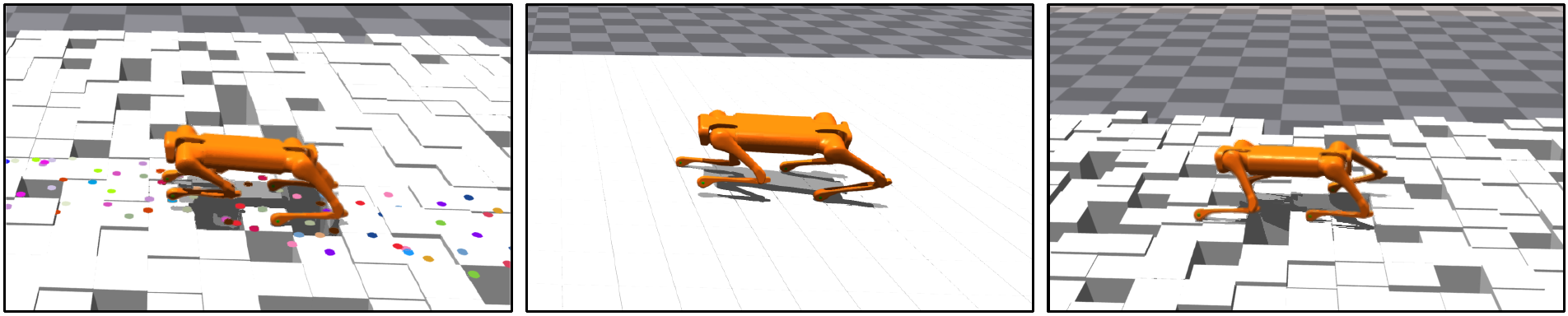}
\end{center}
  \caption{\textbf{Left} The training environment with procedurally generated footstep targets \textbf{Center}: The least-challenging test environment, with 100\% infill and no height variation. \textbf{Right}: The most-challenging test environment with 80\% infill and 10 cm height variation}
\label{fig:rough_terrain_steps}
\end{figure*}

The objective function of the HLP includes the expected discounted rewards of the LLP, which is estimated by the LLP value function, plus an auxiliary objective $\mathbf{H}$. The auxiliary objective is necessary since simply choosing the highest-value footstep targets will yield solutions where the robot steps in place. $\mathbf{H}$ is designed to encourage locomotion in a particular direction and is parameterized by a heading angle $\alpha$ and a weight $\kappa_{HD}$. The robot yaw $\theta_{yaw}$ is used to map the targets in robot frame to the world frame.

\begin{equation}
\mathbf{H} = \begin{bmatrix}
\cos{\alpha} & \sin{\alpha}\\
\end{bmatrix}\
R_z(\theta_{yaw})
\begin{bmatrix}
p_{a,x} & p_{b,x} \\
p_{a,y} & p_{b,y} \\
\end{bmatrix}\
\begin{bmatrix}
1\\
1\\
\end{bmatrix}\
\label{eq:directional_term}
\end{equation}
The directional term is given by Equation \ref{eq:directional_term} where $R_z(\theta_{yaw})$ is a 2D rotation matrix. The objective function for the HLP at time $t$ is given by Equation \ref{eq:full_hlp_objective}. We would like to solve the optimization problem given in Equation \ref{eq:hlp_optimization}.
\begin{equation}
\label{eq:full_hlp_objective}
R_{\text{HLP}} := V(\mathbf{s}_t) + \kappa_{HD} \mathbf{H}
\end{equation}

\begin{equation}
\label{eq:hlp_optimization}
\mathbf{p}^* = \argmax _{\mathbf{p}}\;R_{\text{HLP}}
\end{equation}

The hyperparameter $\kappa_{HD}$ controls the tradeoff between picking targets that maximize the expected success of the LLP with picking targets that advance the robot's movement in direction $\alpha$.

\subsection{Optimization}
\begin{algorithm}
\caption{Grid Search Initialized Gradient Ascent for HLP Optimization}
\begin{algorithmic}[1]
\State \textbf{Input:} Low-level policy value function $V(s_t)$, directional objective $H$, grid search bounds $B$, grid resolution $R$, learning rate $\eta$, number of gradient ascent iterations $N$
\State \textbf{Output:} Optimal footstep targets $p^*$

\State $p_{\text{best}} \gets \mathbf{0}$  \Comment{Initialize the best footstep target}
\State $R_{\text{best}} \gets -\infty$ \Comment{Initialize best reward}
\State $P_{\text{grid}} \gets \text{GenerateGrid}(B, R)$ \Comment{Generate grid points within bounds}

\State \Comment{Grid Search Step}
\For{$p \in P_{\text{grid}}$}
    \State $R \gets V(s_t) + \kappa_{\text{HD}} H(p)$ \Comment{Evaluate HLP objective for each $p$}
    \If{$R > R_{\text{best}}$}
        \State $p_{\text{best}} \gets p$
        \State $R_{\text{best}} \gets R$
    \EndIf
\EndFor

\State \Comment{Gradient Ascent Step}
\State $p \gets p_{\text{best}}$ \Comment{Initialize $p$ with the best grid search result}
\For{$i = 1$ to $N$}
    \State $\nabla R(p) \gets \nabla_p [V(s_t) + \kappa_{\text{HD}} H(p)]$ \Comment{Compute gradient of HLP objective}
    \State $p \gets p + \eta \cdot \nabla R(p)$ \Comment{Update footstep targets using gradient ascent}
\EndFor

\State \Return $p$
\end{algorithmic}
\end{algorithm}

There are multiple options for solving equation \ref{eq:hlp_optimization}, including gradient-based optimization methods, since both the value function and $\mathbf{H}$ are differentiable with respect to $\textbf{P}$.
Leveraging the low-dimensionality of $\textbf{P}$, we  solve Equation \ref{eq:hlp_optimization} with grid-search initialized gradient ascent, the approach shown in Figure \ref{fig:optimization}. We first discretize the 8-dimensional space of $\mathbf{d}_{next}$ into a box $[-B, B]_8$ with $R$ points per axis and query the objective function at each point. The optimum point of the grid search is used as the initialization for gradient ascent. The full algorithm is given in Algorithm 1. In our experiments, we set $\eta = 10^{-4}$, $R = 5$, $B=15\text{cm}$, and $N=5$. These values were chosen to minimize runtime compute requirements without reducing the search space too much and sacrificing performance. We set $\alpha$ from Equation \ref{eq:directional_term} to zero degrees, which corresponds to forward motion.

We present a lemma to lower-bound the error of grid search initialized gradient ascent.

\begin{lemma}
The expected initial error for grid search initialized gradient ascent is smaller than or equal to the expected initial error for random initialized gradient ascent i.e
\begin{equation}
f(x^*) - f(x_{\text{best}}) \leq \mathbb{E}_{x_0 \sim U(\cdot)} [f(x^{*}) - f(x_0)] 
\end{equation}
\begin{equation*}
    x_{\text{best}} = \argmax_{x\in \textit{G}} f(x)
\end{equation*}
where $f(\cdot)$ is the objective function, $\mathbb{E}_{x_0 \sim U(\cdot)}$ denotes the expectation over a uniformly random initialization over the support of $x$, $\textit{G}$ is set of points for grid search, and $x^{*}$ is the parameter which yields the global maximum.
\end{lemma}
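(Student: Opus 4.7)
The plan is to reduce the claim to the elementary fact that a maximum over a finite set dominates a uniform average over that same set. First I would observe that subtracting $f(x^{*})$ from both sides shows the lemma is equivalent to
\begin{equation*}
f(x_{\text{best}}) \;\geq\; \mathbb{E}_{x_0 \sim U(\cdot)}[f(x_0)],
\end{equation*}
so the task reduces to proving this lower bound on the grid-search output. By the definition of $x_{\text{best}} = \argmax_{x \in G} f(x)$, for every $x \in G$ we have $f(x_{\text{best}}) \geq f(x)$, and in particular $f(x_{\text{best}}) \geq \tfrac{1}{|G|}\sum_{x \in G} f(x)$, i.e.\ the max is at least the average over $G$.

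Next I would connect this discrete average back to the continuous uniform expectation. Because \textsc{GenerateGrid}$(B,R)$ produces a regular lattice of $R$ points per axis inside the box $[-B,B]^{8}$, the grid induces a uniform partition of the support into cells of equal measure. Treating $\tfrac{1}{|G|}\sum_{x \in G} f(x)$ as the midpoint (or left-endpoint) Riemann sum over this partition, it is either exactly equal to $\mathbb{E}_{x_0 \sim U}[f(x_0)]$ (when $U$ is interpreted as the uniform distribution over the grid itself) or converges to it as $R \to \infty$ whenever $f$ is Riemann-integrable. Since the paper's objective $f(\mathbf{p}) = V(\mathbf{s}_t) + \kappa_{HD}\mathbf{H}(\mathbf{p})$ is a composition of a neural-network value function and a linear heading term, it is Lipschitz on the compact box, and this regularity is sufficient to make the identification rigorous. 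Chaining the two inequalities yields $f(x_{\text{best}}) \geq \mathbb{E}_{x_0}[f(x_0)]$, and the lemma follows by subtracting from $f(x^{*})$.

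The main subtlety, and the only point requiring care, is this passage from the discrete grid average to the continuous uniform expectation. If one takes $U(\cdot)$ to mean uniform over the support $[-B,B]^{8}$, then strictly speaking the inequality can fail for pathological $f$ unless one exploits regularity (Lipschitzness or bounded variation) and possibly lets $R\to\infty$; if instead $U(\cdot)$ is read as uniform over $G$ itself (the natural discrete analogue of random initialization used in Algorithm 1), the argument is exact and requires no regularity at all. Either reading makes the lemma a one-line consequence of ``max dominates average,'' and I would state the proof under the latter convention while noting the Lipschitz argument handles the continuous interpretation as well.
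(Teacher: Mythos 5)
Your proposal takes essentially the same route as the paper: the paper's entire proof is the single assertion that $f(x_{\text{best}}) \geq \mathbb{E}_{x_0 \sim U(\cdot)}[f(x_0)]$ (with ``the rest is trivial''), which is precisely the reduction and key inequality in your first paragraph. The added value in your write-up is that you isolate the one step the paper glosses over, and your caveat there is warranted: ``the max over the finite grid $G$ dominates the average over $G$'' is immediate, but ``the max over $G$ dominates the uniform average over the continuous box $[-B,B]^{8}$'' does \emph{not} follow from max-dominates-average, since $G$ has measure zero under that distribution. Be slightly careful with your second paragraph, though: Lipschitz continuity alone is not sufficient for fixed $R$ --- one can build a Lipschitz $f$ (with constant large relative to the grid spacing) that dips at every grid point, so that the grid maximum falls below the continuous average; you need either the limit $R \to \infty$ or an explicit error term of order (Lipschitz constant)\,$\times$\,(grid spacing), and your third paragraph correctly walks this back. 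In short, under the discrete reading of $U(\cdot)$ as uniform over $G$ your argument is exact and matches the paper's intent; under the continuous reading the lemma as stated requires the additional regularity-plus-resolution argument you sketch, an imprecision present in the paper's own one-line proof rather than a flaw introduced by you.
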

\begin{proof}
The proof follows from the observation that $\mathbb{E}_{x_0\in \textit\{G\}} f(x_{\textit{best}}) \geq \mathbb{E}_{x_0 \sim U(\cdot)}f(x_0)$ and the rest of the proof is trivial. 
\end{proof}

\section{Experiments and Results} \label{sec:results}
We train our LLP in simulation using NVIDIA Isaac Gym \cite{makoviychuk2021isaac}. We sample 100 steps from 4,000 environments for a total of 400,000 samples per policy update. Each policy is trained for 750 iterations giving 300 million total samples. The training environment consists of terrain with 90\% infill and terrain blocks with heights varying by up to 5 cm. In addition to our proposed method, we also train an end-to-end reinforcement learning policy for comparision. The experiments in this section are designed to answer the following questions:
\begin{itemize}
    \item How does our proposed optimization method perform in quadruped locomotion over challenging terrain compared to a PMTG \cite{iscen2018policies} end-to-end policy? 
    \item Does our proposed approach enable higher LLP rewards than achieved during training?
\end{itemize}

In this section, we will first give more details on LLP training, then define the end-to-end RL policy, and finally discuss results on various test terrains.

\subsection{Training Environment}
\label{appendix:training_env_details}
\begin{figure}[]
    \centering
    \includegraphics[scale=0.7]{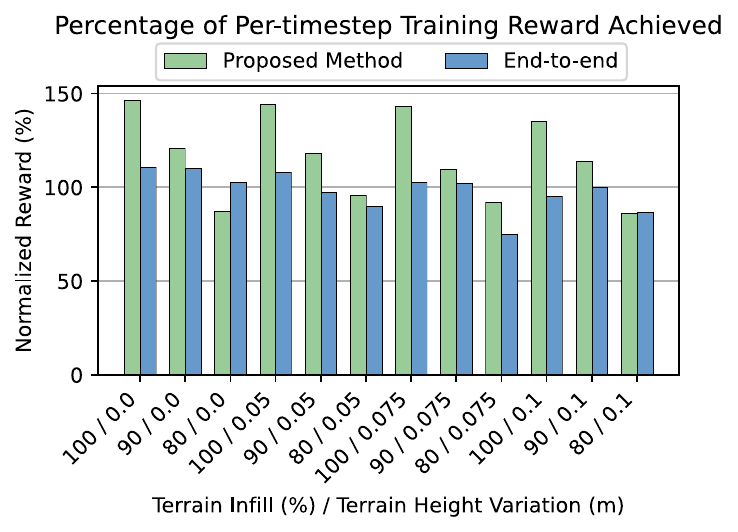}
    \caption{A comparison of the proposed value-function-based approach with an end-to-end RL policy. Each bar represents the average result of five rollouts. The HLP enables the LLP to obtain higher normalized rewards than the end-to-end policy in 10/12 terrains. Even on terrains much more difficult than the ones encountered in training (90 / 0.05), our method achieves normalized rewards greater than 100\%.}
    \label{fig:rough_terrain_small_plot}
\end{figure}

We generate sequences of footstep targets corresponding to a trotting gait, where the robot is tasked with hitting targets for two feet at a time, alternating between the front-left and rear-right feet and the front-right and rear-left feet. We have found empirically that the trotting gait is the most suitable for implementation on the Aliengo robot in terms of robustness and speed. Each environment contains a sequence of footstep targets parameterized by a random step length sampled from $U(0, 0.2)$ m and a random heading sampled from $U(0^{\degree}, 360^{\degree})$. Additionally, all targets are independently randomly shifted by $U(-0.1, 0.1)$ m in the x and y directions. The training terrain is pictured in Figure \ref{fig:rough_terrain_steps}.

\subsection{End-to-End RL Policy}
\looseness=-1 We train an RL policy on the same training terrain with the same trajectory generator action space  \cite{iscen2018policies} using PPO. The reward function for the end-to-end policy contains all of the reward terms and coefficients in section \ref{subsection:training_env} sans the footstep target reward and the velocity towards target reward. Additionally, to encourage forward locomotion, we add the reward term given by Equation \ref{eq:ee_fwd}, where $\mathbf{v}$ is the robot velocity and $\kappa_{VX}$ is set to 1.0. The robot velocity (m/s) is clipped to encourage the development of stable gaits for a fair comparison.

\begin{equation}
\label{eq:ee_fwd}
    \kappa_{VX}\cdot\text{clip}(\textbf{v}_x, -\infty, 0.5)
\end{equation}

Additionally, we add a term to penalize velocity in the y-direction, given below in Equation \ref{eq:ee_ypen}.
\begin{equation}
\label{eq:ee_ypen}
    -\kappa_{VY}|\textbf{v}_y|
\end{equation}

\subsection{Locomotion on challenging terrain}

\begin{figure}[t]
    \centering
    \includegraphics[scale=0.7]{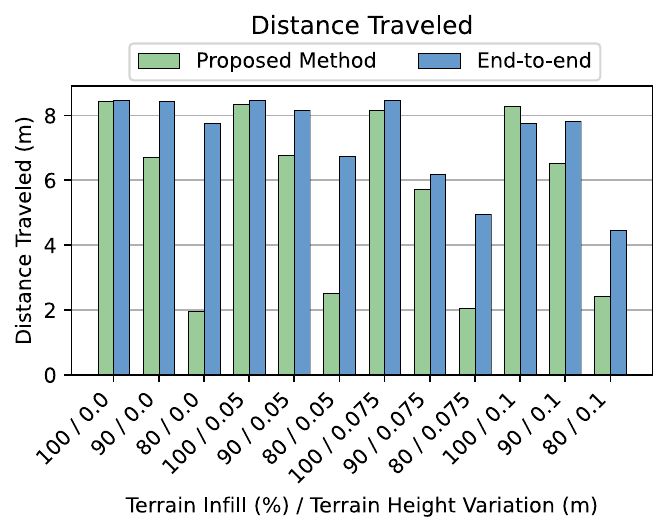}
    \caption{Distance traveled in meters for each approach across different test terrains. Each bar represents the average result of five rollouts.}
    \label{fig:distance_traveled_plot}
\end{figure}

We test the trained policies on environments of varying difficulty, depicted in Figure \ref{fig:rough_terrain_steps}. Our simulation terrain varies in difficulty along two axes: infill and height variation. An infill lower than 100\% indicates gaps or holes in the terrain. The heights of terrain blocks are uniformly randomized such that the maximum range of heights is equal to a terrain height variation parameter. We run experiments on terrains with 100, 90, and 80 percent infill and 0, 5, 7.5, and 10 cm height variation. For all experiments with the proposed method, we set $\alpha$ in Equation \ref{eq:directional_term} to 0.0, which corresponds to rewarding footstep targets set in the positive x-direction. The weight of the directional term ($\kappa_{HD}$ in Equation \ref{eq:hlp_optimization}) is set to 50.0 for all experiments.

\subsubsection{Percentage of Per-Timestep Training Reward Achieved}
We use reward as a proxy for overall performance of each method, since it encodes the core objective of hitting footstep targets (or forward velocity, for the end-to-end policy) in additional to other practical concerns such as avoiding collisions and slipping. Since the proposed method and the end-to-end method have differing reward functions, we normalize rewards by the maximum reward achieved during training. Figure \ref{fig:rough_terrain_small_plot} plots the normalized rewards achieved on our array of test terrains. The HLP optimization process enables higher rewards than those achieved in training in eight out of 12 terrains. The end-to-end policy cannot benefit from online optimization, giving a lower normalized reward than our proposed method on 10 out of 12 terrains.
\subsubsection{Distance Traveled}
Figure \ref{fig:distance_traveled_plot} shows that our proposed method travels a shorter distance than the end-to-end method in all but two environments. We posit that this is due to our objective of picking high-value, or "safe", footstep targets to execute. The largest gaps in distance occur in the 80\% infill environments, where the presence of holes stops forward progress, since it is impossible to hit a footstep target over a hole. Our method's conservativism in such a scenario is highlighed in the next subsection.
\subsubsection{Collisions}
Figure \ref{fig:collisions} gives the average number of collisions per timestep. In two environments with 80\% and 90\% infill, the end-to-end policy experiences an extremely high number of collisions, nearing an average of one collision per timestep. We believe this is due to the end-to-end policy getting stuck in a terrain hole, which does not occur with our proposed method.

\section{Conclusion} \label{sec:conclusion}
We propose a hierarchical reinforcement learning framework that improves performance on simulated quadruped locomotion over difficult terrain as demonstrated through higher normalized rewards and lower numbers of collisions. By leveraging a novel approach where the HLP optimizes over footstep targets using the LLP value function, we remove the requirement for additional environment samples or neural network parameter updates beyond LLP training.
\begin{figure}[t]
    \centering
    \includegraphics[scale=0.7]{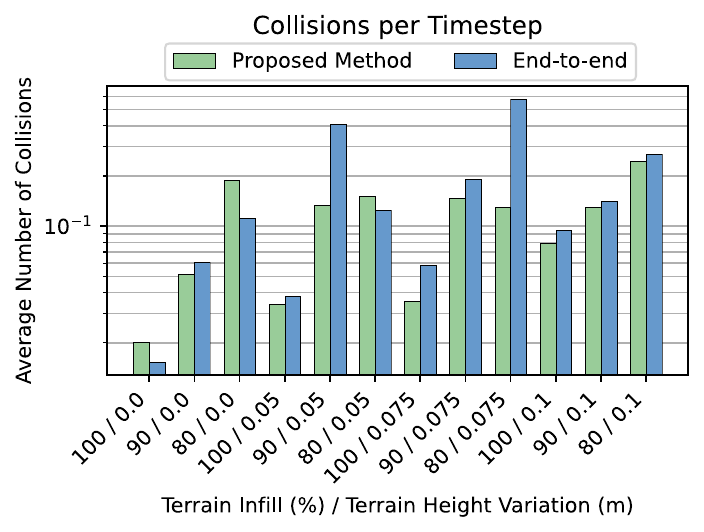}
    \caption{A collision is defined as a robot linkage (excluding feet) in contact with terrain, or a robot linkage in contact with another linkage. Multiple collisions may be counted at a single timestep. Each bar represents the average result of five rollouts.}
    \label{fig:collisions}
\end{figure}
Future work will focus on conducting hardware experiments to further validate the applicability of our approach in physical environments. Additionally, we aim to explore integrating model-based controllers as the low-level policy, as modularity is a practical benefit of hierarchical reinforcement learning. A combination of model-based and learning-based approaches offers a promising direction for further improving the adaptability and reliability of quadruped locomotion in complex real-world applications.

\bibliographystyle{IEEEtran}
\bibliography{references}

\end{document}